\newtheorem{theorem}{Theorem}
\begin{document}

%

%

\twocolumn[

\aistatstitle{Learning Morphisms with Gauss-Newton Approximation for Growing Networks}

\aistatsauthor{ Neal Lawton \And Aram Galstyan \And  Greg Ver Steeg }

\aistatsaddress{ University of Southern California / Information Sciences Institute } ]

\begin{abstract}
A popular method for Neural Architecture Search (NAS) is based on growing networks via small local changes to the network’s architecture called network morphisms. These methods start with a small seed network and progressively grow the network by adding new neurons in an automated way. However, it remains a challenge to efficiently determine which parts of the network are best to grow. Here we propose a NAS method for growing a network by using a Gauss-Newton approximation of the loss function to efficiently learn and evaluate candidate network morphisms. We compare our method with state of the art NAS methods for CIFAR-10 and CIFAR-100 classification tasks, and conclude our method learns similar quality or better architectures at a smaller computational cost.
\end{abstract}

\section{INTRODUCTION}
Neural Architecture Search (NAS), which seeks to automate the architectural design of neural networks, has become a central problem in machine learning research~\cite{Elsken-survey}. Researchers often advance state-of-the-art by carefully designing novel network architectures for specific problems, e.g., ResNets~\cite{he2016deep} for image classification and transformers~\cite{vaswani} for natural language processing. Automating this design process can lead to significant efficiency gains. In fact, NAS is already being used to improve state-of-the-art. For example, EfficientNet \cite{tan2019efficientnet} achieves various state-of-the-art results by expanding a good baseline network found using the MnasNet architecture search method \cite{tan2019mnasnet}. Another example is the latest generation of MobileNets \cite{howard2019searching}, which was designed with assistance from NAS methods.

There are many different methods for performing NAS, such as evolutionary methods~\cite{elsken2017simple,  real2019regularized, nekrasov2019fast}, reinforcement learning methods~\cite{zoph2018learning, gong2019autogan, zhong2018practical}, pruning methods\cite{frankle2018lottery, liu2017learning, han2015deep}, and growing methods\cite{liu2018progressive, gordon2018morphnet,liu2019splitting, wu2020firefly}. The method proposed in this paper belongs to the family of growing methods. Growing methods have the benefit of being relatively computationally inexpensive: like pruning methods, they only ever train one network over the duration of the architecture search process, as opposed to evolutionary and reinforcement learning methods that train many independent networks. Further, growing methods are less computationally expensive than pruning methods, since it is less computationally expensive to train and grow from a small seed network than it is to train and prune a large network.

Growing methods also come with challenges. Growing methods grow networks via small local changes to the network's architecture called \textit{network morphisms}. These morphisms are parameterized by parameters $\theta$, so that when $\theta=0$, the input-ouput mapping of the neural network is unchanged (see Figure \ref{fig:morphism-diagrams}). To grow a network, we must choose which morphisms to apply as well as the parameters for those morphisms. Ideally, we would choose morphism parameters to minimize the loss achieved by the expanded network that results from applying the morphism, but it is computationally prohibitive to find those parameters via optimization when there are many possible morphisms to consider.

In this paper, we propose a method for learning and evaluating morphism parameters quickly and efficiently. Our method utilizes a Gauss-Newton approximation of the loss function to estimate the expected decrease in the loss that would be achieved by applying each morphism. We then optimize this approximate loss function using backpropagation to learn and evaluate morphism parameters without ever constructing a large expanded network. We use this method to design a NAS algorithm that concurrently grows and trains a network on a dynamic schedule.

We compare our method with other NAS methods on CIFAR-10 and CIFAR-100 classification tasks \cite{krizhevsky2009learning}. We observe that our method grows networks with similar or better parameter-accuracy tradeoff compared to baseline methods. Our experiments indicate that our proposed NAS method achieves state-of-the-art performance at a fraction of the computational cost. 

\section{RELATED WORK}
There are many different ways to approach neural architecture search. Evolutionary methods~\cite{elsken2017simple, real2017large, real2019regularized, nekrasov2019fast, wong2018transfer} explore the architecture search space by generating several child networks from a parent network, training those child networks, then discarding child networks that do not perform well. Reinforcement learning methods~\cite{tan2019mnasnet, zoph2018learning, gong2019autogan, zhong2018practical} view network architectures as sequences of tokens, and train a controller RNN to generate good network architectures. These reinforcement learning methods require sampling and training several hundred networks from the controller RNN. Reinforcement learning methods have been used to great success as components of larger machine learning pipelines \cite{howard2019searching, tan2019efficientnet}, and have learned networks that outperform many important hand-designed networks with unique architecture elements \cite{he2016deep, hu2018squeeze, han2017deep, zhang2018shufflenet, iandola2016squeezenet, chollet2017xception}. Evolutionary methods and reinforcement methods are well suited for exploring certain aspects of the discrete architecture search space, and can explore a larger design space than growing and pruning methods. For example, reinforcement learning methods can learn where to place architecture elements like dropout and batchnorm layers, while growing and pruning methods typically only search for the number of convolutional layers and the number of channels to use in each convolutional layer.

Pruning methods \cite{frankle2018lottery, liu2018rethinking, liu2017learning, han2015deep} have become popular for shrinking large, high-performing networks down to much smaller networks without sacrificing test accuracy. One-shot methods \cite{pham2018efficient} are similar: they simplify NAS by constraining the search space to subgraphs of a large trained network. These methods are much less computationally expensive than reinforcement learning and evolutionary methods, but still require training a large network.

The computational inexpensiveness of growing progressively larger networks has been exploited for NAS \cite{liu2018progressive, rusu2016progressive, gordon2018morphnet} and for training fixed networks \cite{karras2017progressive}. Growing networks via network morphisms has previously been used in combination with reinforcement learning \cite{cai2018efficient} and evolutionary NAS methods \cite{elsken2018efficient}. In contrast, we use network morphisms to view NAS as a continuous optimization problem, similar to other differentiable architecture search methods \cite{luo2018neural, shin2018differentiable, liu2018darts}. Unlike Net2Net \cite{chen2015net2net}, which applies network morphisms with random parameters, we build upon a recent line of work \cite{liu2019splitting, wang2019energy, wu2020steepest, wu2020firefly} that has made progress in efficiently learning and evaluating morphisms. 

We use a Gauss-Newton approximation to estimate the decrease in the loss achieved by applying a network morphism. Bayesian optimization NAS methods \cite{jin2018auto, liu2018progressive, klein2017learning, negrinho2017deeparchitect} also try to estimate the performance of new networks without training them. However, these methods use the performance of previously seen networks to predict the performance of future unseen networks, while our predictions are made independently of previously seen networks. In fact, our technique is more similar to \cite{lecun1990optimal} in which the authors use a diagonal approximation of the Hessian to estimate the change in the loss when pruning neurons.

\section{METHOD}
\subsection{Morphisms}

\begin{figure*}[t]
    \centering
    \begin{subfigure}[t]{0.4\textwidth}
        \parbox{0.4 \textwidth}{
            \centering
            \resizebox{!}{0.7 \textwidth}{
                \begin{tikzpicture}[scale=0.8]
	
	\begin{scope}[shift={(0,0)}]
		\node [draw, rectangle] (x) at (0,  2) {$x$};
		\node [draw, circle] (y) at (0,  0) {$y$};
		\node [draw, rectangle] (z) at (0, -2) {$z$};
		\draw[->, thick] (x) -- (y) node [midway, left] {$w_{in}$};
		\draw[->, thick] (y) -- (z) node [midway, left] {$w_{out}$};
	\end{scope}

	\begin{scope}[shift={(1.0,0)}]
		\node (arrow) at (0, 0) {$\implies$};
	\end{scope}

	\begin{scope}[shift={(3,0)}]
		\node [draw, rectangle] (x) at (0,  2) {$x$};
		\node [draw, circle] (y1) at (-1,  0) {$y_1$};
		\node [draw, circle] (y2) at (1,  0) {$y_2$};
		\node [draw, rectangle] (z) at (0, -2) {$z$};
		\draw[->, thick] (x) -- (y1) node [midway, left] {$w_{in} + \theta$};
		\draw[->, thick] (x) -- (y2) node [midway, right] {$w_{in} - \theta$};
		\draw[->, thick] (y1) -- (z) node [midway, left] {$\frac{1}{2} w_{out}$};
		\draw[->, thick] (y2) -- (z) node [midway, right] {$\frac{1}{2} w_{out}$};
	\end{scope}

\end{tikzpicture}
            }
        }
        \caption{Channel splitting morphism}
        \label{fig:channel-splitting-morphism}
    \end{subfigure}
    \qquad
    \begin{subfigure}[t]{0.4\textwidth}
        \parbox{0.4 \textwidth}{
            \centering
            \resizebox{!}{0.7 \textwidth}{
                \begin{tikzpicture}[scale=0.8]
	
	\begin{scope}[shift={(0,0)}]
		\node [draw, rectangle] (x) at (0,  2) {$x$};
		\node [draw, circle] (y) at (0,  0) {$y$};
		\node [draw, rectangle] (z) at (0, -2) {$z$};
		\draw[->, thick] (x) -- (y) node [midway, left] {$w_{in}$};
		\draw[->, thick] (y) -- (z) node [midway, left] {$w_{out}$};
	\end{scope}

	\begin{scope}[shift={(1.5,0)}]
		\node (arrow) at (0, 0) {$\implies$};
	\end{scope}

	\begin{scope}[shift={(3,0)}]
		\node [draw, rectangle] (x) at (0,  2) {$x$};
		\node [draw, circle] (y) at (0,  0) {$y$};
		\node [draw, rectangle] (z) at (0, -2) {$z$};
		\draw[->, thick] (x) -- (y) node [midway, left] {$w_{in} - \theta$};
		\draw[->, thick] (y) -- (z) node [midway, left] {$w_{out}$};
	\end{scope}

\end{tikzpicture}
            }
        }
        \caption{Channel pruning morphism}
        \label{fig:channel-pruning-morphism}
    \end{subfigure}
\caption{Network morphisms. Square nodes represent convolutional layers, circular nodes represent convolutional channels.}
\label{fig:morphism-diagrams}
\end{figure*}
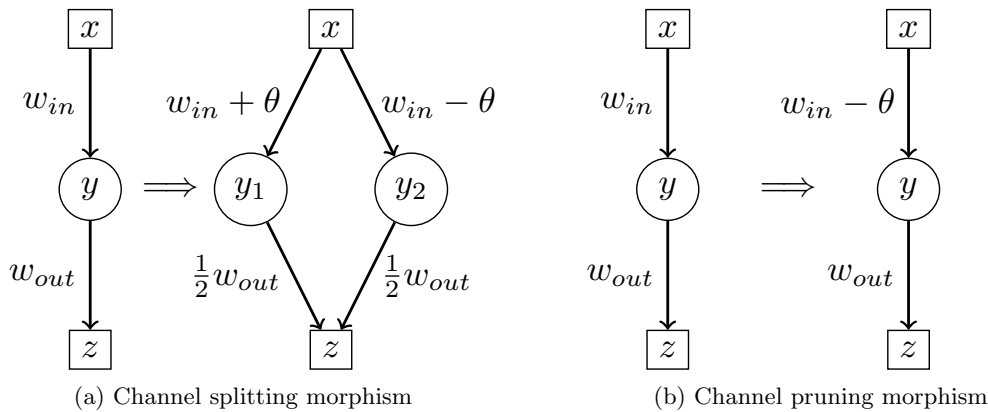

A \textit{network morphism} is a small change in a neural net's architecture parameterized by $\theta$ so that when $\theta = 0$, the morphism is function-preserving, i.e., the input-output mapping of the neural network is unchanged. In this paper, we consider several morphisms.

The first morphism we consider is a channel-splitting morphism that grows a network wider, depicted in Figure \ref{fig:channel-splitting-morphism}. For a convolutional channel $y$ with input from a layer $x$ with incoming kernel parameters $w_{in}$ and output to a layer $z$ with outgoing kernel parameters $w_{out}$, applying the channel-splitting morphism replaces the channel $y$ with two channels $y_1$ and $y_2$, with incoming kernel parameters $w_{in} + \theta$ and $w_{in} - \theta$ respectively, and each with outgoing kernel parameters $w_{out}/2$. If $\theta=0$, then this morphism duplicates the channel $y$ without changing the input-output mapping of the neural network; if $\theta \neq 0$, then this morphism replaces the feature detected by $y$ with two new feature detectors with parameters $w_{in} + \theta$ and $w_{in} - \theta$. For example, if $y$ is an edge detector, then $y_1$ and $y_2$ may detect two similar edges with slightly different angles, or along slightly different color gradients.

The second morphism we consider is a channel-pruning morphism, depicted in Figure \ref{fig:channel-pruning-morphism}. For a channel $y$ with incoming kernel parameters $w_{in}$, applying the channel-pruning morphism subtracts $\theta$ from $w_{in}$. If $\theta=0$, then the network is unchanged, but if $\theta = w_{in}$, then the incoming kernel parameters of $y$ are zero, and the channel $y$ can be pruned. The parameters of this morphism are not learned, but instead are always chosen to be $\theta = w_{in}$. 



Though not depicted in these diagrams, it is important to note that the activations of each layer are passed through a channelwise function $\sigma_\beta(\cdot)$ with different learnable parameters $\beta$ for each layer. Typically $\sigma_\beta(\cdot)$ is a composition of several simple channelwise operations, such as ReLU, dropout, batch normalization, and depthwise convolution. In this case, $\beta$ consists of the learnable parameters of the batch normalization layer and depthwise convolution (ReLU and dropout operators do not have learnable parameters). In the channel-splitting morphism, $y_1$ and $y_2$ use the same $\sigma$ and $\beta$ as the original channel $y$. 

To apply a particular morphism, we must first choose values for the morphism's parameters $\theta$. The best choice for the morphism's parameters would maximally decrease the loss of the network when the morphism is applied. Calculating this decrease exactly on a mini-batch of samples requires expanding the network by applying the morphism, then executing several forward passes through the expanded network. Optimizing the morphism parameters with SGD requires several forward and backward passes through the expanded network. This is computationally prohibitive when there are many morphisms under consideration, e.g., splitting every channel in the network.

\subsection{Gauss-Newton Approximation}

Instead, we can approximate the decrease in the loss function for each morphism. We assume that each morphism is \textit{local}, so that there exists a collection of network activations $z$ such that the mapping between the network input and any activation higher than $z$ in the computational DAG is unchanged for any choice of $\theta$. Consider the expanded networks depicted in Figure \ref{fig:morphism-diagrams}. Denote $\Delta \mathcal L(\theta)$ the change in the loss function after applying the morphism with parameters $\theta$; $\Delta z(\theta)$ the change in $z$ after applying the morphism with parameters $\theta$; $g$ the gradient of the loss function with respect to $z$ at $\theta=0$; and $H$ the Hessian of the loss function with respect to $z$ at $\theta=0$. Consider the second-order approximation of the change in loss function with respect to $z$ centered at $\theta=0$: 
\[
\Delta \mathcal L(\theta) \approx \Delta z(\theta) \cdot g + \frac{1}{2} \Delta z(\theta)^\top H \Delta z(\theta)
\]
This may be a good approximation, but computing the Hessian matrix of second derivatives is undesirable. Instead, we can make a Gauss-Newton approximation of the Hessian matrix, where $\hat {\mathcal L}$ is the current training loss:
\[
H \approx \frac{1}{2 \hat{\mathcal L}} g g^\top
\]
Note that this variation of the Gauss-Newton approximation scales correctly: if we scale the loss $\mathcal L$ by a constant factor, then the true Hessian $H$ scales by the same factor, as does our approximation.

Plugging in this approximation yields:
\begin{gather*}
\Delta \mathcal L(\theta) \approx \Delta z(\theta) \cdot g + \frac{1}{4 \hat{\mathcal L}} (\Delta z(\theta) \cdot g)^2
\end{gather*}
Note the Gauss-Newton approximation makes a rank-1 approximation of the Hessian. If the true Hessian is not low-rank, we expect the Gauss-Newton approximation to be inaccurate. It is important to consider this when approximating the Hessian for a mini-batch of samples: the joint sample Hessian for a mini-batch of samples is a block diagonal matrix with one block per sample, and is therefore probably not low-rank. However, each diagonal block of the joint sample Hessian is dense and more likely to be low-rank. For this reason, it is important to apply the Gauss-Newton approximation independently for each sample. For these reasons, for a mini-batch of samples we approximate
\begin{gather}
\Delta \mathcal L(\theta) \approx \frac{1}{S} \sum_{s=1}^S \Delta z_s(\theta) \cdot g_s + \frac{1}{4 \hat{\mathcal L}} (\Delta z_s(\theta) \cdot g_s)^2
\label{eq:gn}
\end{gather}
Recent work \cite{wu2020steepest, wang2019energy} also uses a second-order approximation of the loss function to learn morphism parameters. In that work, the authors make a second-order approximation of the loss function with respect to $\theta$. In contrast, we make a second-order approximation of the loss function with respect to $\Delta z(\theta)$. Critically, because $\Delta z(\theta)$ is a non-linear function of $\theta$, our Gauss-Newton approximation is still a high-order approximation of $\Delta \mathcal L(\theta)$ with respect to $\theta$.
\subsection{Algorithm}

Note that $g$ is the usual gradient of the loss function with respect to $z$ computed during backpropagation. In PyTorch, we use backward hooks on each layer to capture the gradient $g$ of the loss function with respect to the layer output. Then we compute $\Delta z(\theta)$ for each morphism using the current morphism parameters $\theta$ and the output from layer $x$ to execute a forward pass through the small 2-layer network on the right hand side of each diagram in Figure \ref{fig:morphism-diagrams}. 
Then we use the computed $g$ and $\Delta z(\theta)$ to construct the Gauss-Newton approximation in (\ref{eq:gn}) 
and differentiate that approximation with respect to $\theta$. After computing morphism parameter gradients this way, morphism parameters can be updated with a user-specified optimization algorithm. For a summary of this process, see Algorithm \ref{alg:algorithm}.

Since $\Delta \mathcal L(\theta)$ is computed independently for each training mini-batch, we record an exponential moving average of $\Delta \mathcal L(\theta)$ across mini-batches using a momentum hyperparameter to get a lower variance estimate of the decrease in the loss function. After computing a low-variance estimate of $\Delta \mathcal L(\theta)$, we can then weigh the tradeoff for each morphism between the estimated change in loss $\Delta \mathcal L(\theta)$ and the change in computational resource cost incurred, e.g., the number of parameters introduced, when applying the morphism. To quantify this tradeoff, we introduce a regularization hyperparameter indicating the desired tradeoff between training loss and number of parameters. Then we say a morphism has a positive loss-resource tradeoff if
\[
-\Delta \mathcal L(\theta) > \lambda_p \Delta R_p
\]
where $\lambda_p$ is the hyperparameter regularization constant on the number of parameters and $\Delta R_p$ is the change in the number of parameters resulting from applying the morphism. Given the exponential moving average estimate of $\Delta \mathcal L(\theta)$, checking whether a morphism has positive loss-resource tradeoff takes constant time. 

Although it is possible to compute model and morphism parameter gradients in the same backward pass, empirically we find that updating morphism and model parameters concurrently yields noisy morphism parameters that do not perform well. Instead, we propose a phased algorithm for growing while training, summarized in Algorithm \ref{alg:algorithm}. In this algorithm, training and growing are separated into alternating phases of length $n_{\text{phase}}$ epochs, where $n_{\text{phase}}$ is a hyperparameter. In all our experiments, we use $n_{\text{phase}}=20$. In the first phase, only model parameters are updated; in the second phase, only morphism parameters are updated. At the end of the morphism learning phase, we compute each morphism's loss-resource tradeoff. Then for each layer, we apply the top $30\%$ of morphisms local to that layer with positive loss-resource tradeoff.

Note that applying all morphisms with positive tradeoff at the end of each morphism learning phase can be unstable. This is because each computed $\Delta \mathcal L(\theta)$ estimates the change in loss function from applying a single morphism. However, these $\Delta \mathcal L(\theta)$'s are not additive: the change in loss from applying many morphisms simultaneously may not be well approximated by the sum of their $\Delta \mathcal L(\theta)$'s. In fact, we almost always observe a small increase in training loss when applying morphisms at the end of each growing phase, even when the sum of the morphisms' $\Delta \mathcal L(\theta)$'s is negative.

\begin{algorithm}[ht]
\caption{Growing networks with Gauss-Newton}
\label{alg:algorithm}
\KwData{Dataset $D$, model $M$, phase length $n_{\text{phase}}$}
\For{$t = 1, \dots, n_{\text{phase}}$}{
	\ForEach {mini-batches $\{d_s\} \in D$}{
		Compute mini-batch loss $\mathcal L = M(\{d_s\})$\;
		Compute all $\nabla_w \mathcal L$ with backprop\;
		SGD step model parameters $w$\;
	}
}
\For{$t = 1, \dots, n_{\text{phase}}$}{
	\ForEach {mini-batches $\{d_s\} \in D$}{
		  Compute all $\Delta \mathcal L(\theta)$ and $\nabla_\theta \mathcal L$ with backprop\;
    	Update exponential moving average of $\Delta \mathcal L(\theta)$\;
		SGD step morphism parameters $\theta$\;
	}
}
\ForEach{top $30\%$ morphisms with positive tradeoff}{
	Apply morphism\;
}
\end{algorithm}

\section{EXPERIMENTS}
\begin{figure*}[t]
\centering
\includegraphics[scale=0.085]{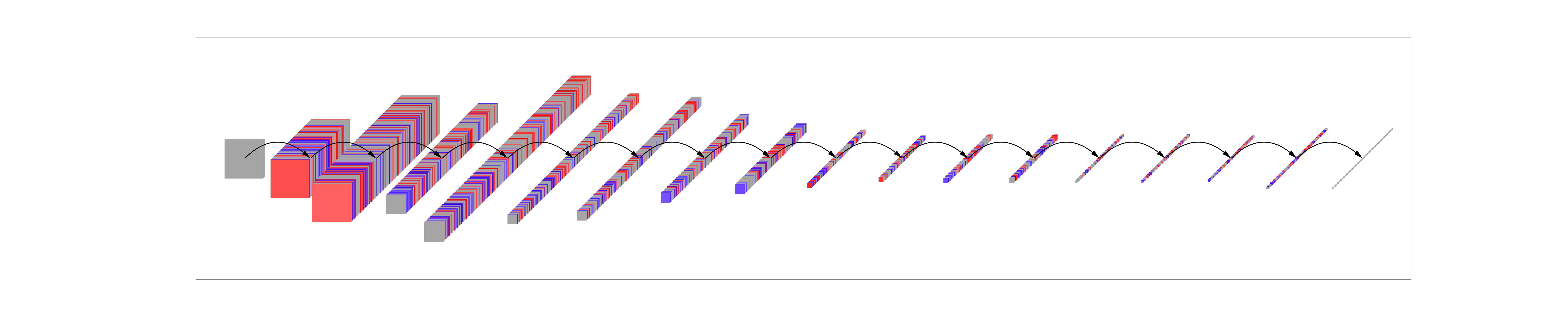}
\caption{Network grown from a VGG-19 seed network by our algorithm for classifying CIFAR-100. Here the network is at the end of its $15$-th growth phase. Channels colored red will be split with their learned channel-splitting morphism parameters in the next epoch; splitting the reddest channels is estimated to give the highest loss-resource tradeoff. Channels colred blue will be pruned in the next epoch; pruning the bluest channels is estimated to give the highest loss-resource tradeoff.}
\label{fig:example-network}
\end{figure*}
In all our experiments, we train with a batch size of $64$ and use a simple data augmentation scheme for CIFAR-10 and CIFAR-100: random horizontal flips and random crops with padding $4$.

\subsection{Gauss-Newton Approximation Accuracy}
Here we evaluate the accuracy of our Gauss-Newton approximation of the loss. We begin by constructing a VGG-19 model for CIFAR-10 and equip it with channel-splitting morphisms, one for each channel in each convolutional layer in the network. We trained the VGG-19 model with SGD with learning rate $0.1$ for $20$ epochs while holding morphism parameters constant. Then we updated morphism parameters with Adam with learning rate $10^{-2}$ for $20$ epochs while updating the exponential moving average estimate of $\Delta \mathcal L$ using momentum hyperparameter $\frac{64}{50000} \times \frac{1}{2}$ so that our estimate of $\Delta \mathcal L$ is approximately an average over the last $2$ epochs. We then computed the true change in loss achieved by each morphism with its current parameters by applying each morphism to construct an independent expanded network and evaluating that expanded network on the test dataset. We then compared our exponential moving average estimate of $\Delta \mathcal L$ with the true value. 

The results are depicted in Figure \ref{fig:gn-approx-epoch-40}. Each circle represents a single channel-splitting morphism in the specified layer. There are $64$ channels in the first layer of the VGG-19 model, and $512$ channels each in the $9$-th and last layers. The figure plots our exponential moving average estimate of $\Delta \mathcal L$ against the true $\Delta \mathcal L$ computed via brute force. If our method were $100\%$ accurate, all circles would lie on the grey dashed lines. 

The figure shows that the Gauss-Newton approximation used by our algorithm is quite accurate. This result by itself is significant. Other methods expend enormous computational resources trying to estimate how the loss of a network changes when channels are added or removed from the network. This result shows that the change in loss can be approximated to a high degree of accuracy using only statistics of the network, namely $\Delta z(\theta) \cdot g$.

We also observe that the Gauss-Newton approximation seems to be most accurate for the layer closest to the network output, and least accurate for the layer closest to the network input, though the reason for this behavior is unclear.

We conclude that the Gauss-Newton approximation used by our algorithm estimates $\Delta \mathcal L$ for each morphism to a high degree of accuracy.

\begin{figure*}[t]
\centering
\begin{subfigure}[t]{0.3 \textwidth}
    \centering
        \includegraphics[width=\textwidth]{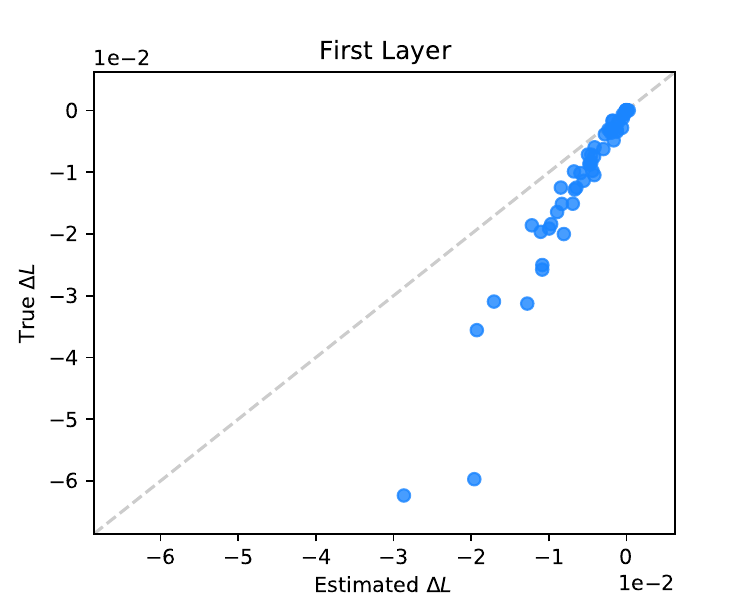}
    \label{fig:gn-approx-epoch-40-layer0}
\end{subfigure}
\hfill
\begin{subfigure}[t]{0.3 \textwidth}
    \centering
        \includegraphics[width=\textwidth]{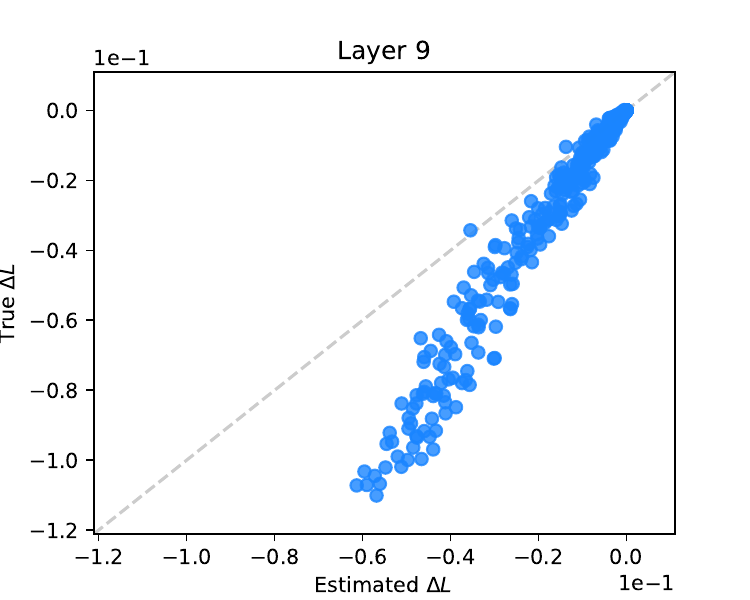}
    \label{fig:gn-approx-epoch-40-layer8}
\end{subfigure}
\hfill
\begin{subfigure}[t]{0.3 \textwidth}
    \centering
        \includegraphics[width=\textwidth]{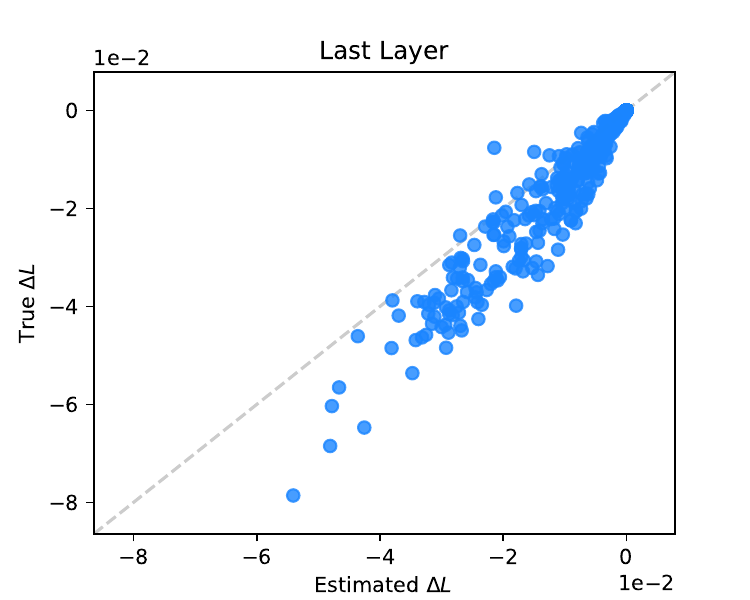}
    \label{fig:gn-approx-epoch-40-layer15}
\end{subfigure}
\caption{Estimated versus actual decrease in loss for morphisms learned while holding model parameters constant. }
\label{fig:gn-approx-epoch-40}
\end{figure*}

\subsection{Learned Morphism Quality}
Here we compare the quality of our learned morphisms to those learned via other methods. Another method for learning morphism parameters is to apply the morphism to construct an expanded network, then optimize the loss of the expanded network with respect to the morphism parameters. This allows us to learn morphism parameters that minimize the loss rather than an approximation of the loss, but is computationally expensive to scale when there are many morphisms under consideration.

Another method for choosing morphism parameters is to use the steepest descent direction as in \cite{wu2020steepest, wang2019energy, wu2020firefly}. However, the steepest descent direction does not indicate the optimal scale for $\theta$. To approximately compute the optimal scale, we perform a line search along the steepest descent direction, though this is computationally expensive.

We compare the true decrease in loss achieved by the morphism parameters learned by our algorithm with the true decrease in loss achieved by the morphism parameters produced by the two baselines described above. We do this for each of the possible $64$ channel-splitting morphisms in the first layer of the VGG-19 network trained in the previous experiment. The result is in Figure \ref{fig:compare-directions}. Each $3$-bar cluster plots the true decrease in loss achieved by the morphism parameters learned by each method for the corresponding channel-splitting morphism. For ease of viewing, we have sorted the channels with respect to the true decrease in loss achieved by the first baseline method. Note that for some channels, none of the methods are able to find good morphism parameters. After inspecting these features, we observe that at this point in training (epoch $20$), those channels have already ``died'' due to $L_2$ weight regularization, so it is likely not possible to split such bad feature detectors into two good feature detectors.

From the figure, we observe that the morphisms learned by our method most often achieve a greater decrease in loss than those learned by the steepest descent with line search baseline method. We also observe that the true decrease in loss achieved by our learned morphisms most often comes within a constant factor of the decrease achieved by the expensive network expansion baseline. We observe this most often among the morphisms with the highest potential decrease in loss; this is important, since these are the morphisms that will be selected by our algorithm to be applied to grow the network. We conclude that our algorithm learns high quality morphisms, on par with the expensive network expansion baseline method.

\begin{figure*}[t]
\centering
\includegraphics[width=\textwidth]{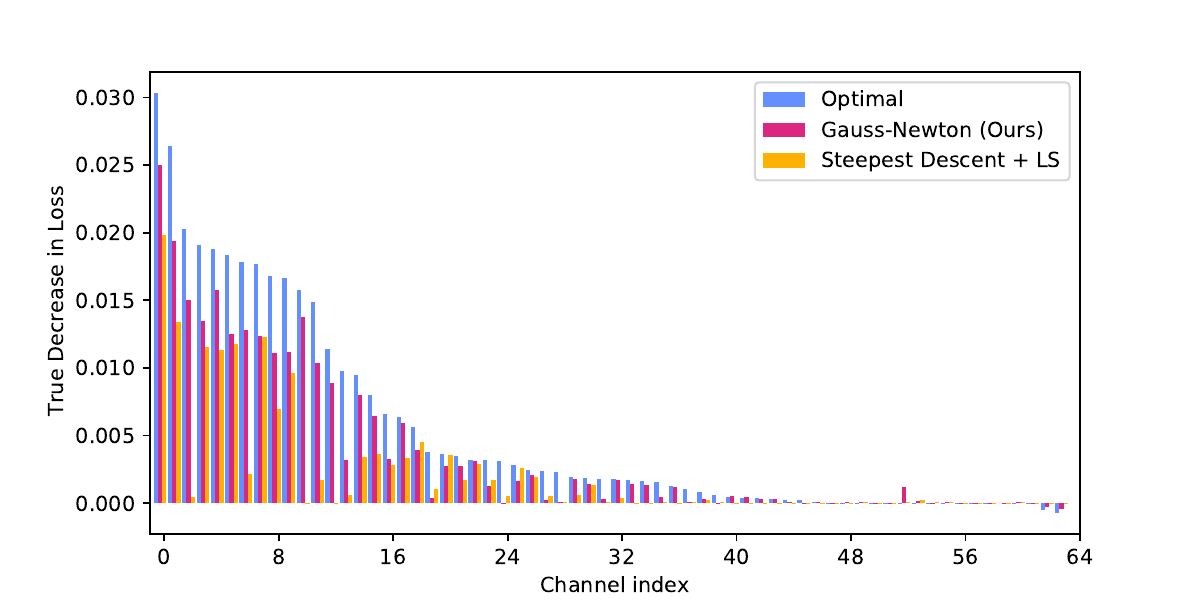}
\caption{Comparison of different morphism learning strategies. Each $3$-bar cluster plots the true decrease in loss achieved by the channel-splitting morphism learned by each method for one of the $64$ channels in the first layer of VGG-19. }
\label{fig:compare-directions}
\end{figure*}

\subsection{End-to-end Evaluation}
Here we compare our NAS algorithm end-to-end with other methods for learning architectures for classifying CIFAR-10 and CIFAR-100. We experiment with different choices of the loss-resource tradeoff hyperparameter to grow networks of many different sizes. We grow networks from one of two seed networks. The first is a VGG-19 network with $16$ channels in each convolutional layer. The second is a MobileNetV1 network with $32$ channels in each convolutional layer. In each experiment, we run our algorithm for a total of $30$ training and growing phases. We optimized model parameters using SGD with Nesterov momentum $0.9$, weight decay $10^{-4}$, and a learning rate that begins at $0.1$ and decreases by a factor of $10$ at epochs $300$ and $450$. We optimized morphism parameters with Adam and a learning rate of $10^{-2}$. After our algorithm terminates, we reinitialize the network's model parameters and retrain the model from scratch to more accurately determine the best test accuracy achievable for the learned architecture.

A visualization of a network grown by our algorithm from a VGG-19 seed network for classifying CIFAR-100 is in Figure~\ref{fig:example-network}. It is worthwhile to point out that growing from a uniform-width seed network, our algorithm naturally discovers that a unique, bottleneck-shaped architecture provides the best loss-parameter tradeoff.

Next, we report the results for CIFAR-10 and CIFAR-100 classification tasks in Tables \ref{table:cifar10} and \ref{table:cifar100}, respectively. We compare with other NAS methods as well as human-designed baselines. We observe that our method produces networks with similar or better parameter-accuracy tradeoff at a smaller computational cost. For example, a network we grew from a VGG-19 seed network using $\lambda_p = 3 \times 10^{-7}$ achieved $5.6\%$ test error on CIFAR-10 using only $1.2$ million parameters, which achieves lower test error with fewer parameters compared to \cite{liu2017learning}, which pruned a VGG-19 model down to $2.3$ million parameters and achieved $6.2\%$ test error. Similarly, a network we grew from a MobileNetV1 seed network using $\lambda_p = 3 \times 10^{-7}$ achieved $25.9\%$ test error on CIFAR-100 using only $1.4$ million parameters, which achieves lower test error with fewer parameters compared to \cite{he2016deep}, a ResNet that achieves $27.2\%$ test error with $1.7$ million parameters.

Note that our method of growing from simple VGG-19 and MobileNetV1 networks with simple channel splitting and pruning morphisms is not enough to outperform complex architectures like those produced by NASNET. Architecture elements necessary for high performance, like residual connections and squeeze-excite modules, make growing complicated because they force several layers to have the same number of channels, disallowing us from splitting channels in different layers independently. It may be possible to grow from these types of seed networks using more complex morphisms that split channels in multiple layers jointly, but this is left for future work.

\begin{table*}[ht]
\centering
\begin{tabular}{|c|c|c|c|c|c|}
\hline
Method Type & Reference & Error & Params & Reachable & GPU time \\
 &  & (\%) & (Millions) & & (days) \\
\hline \hline
\multirow{3}{*}{SOTA} 
& AmoebaNet-A \cite{real2019regularized}      & 3.3 &  3.2 &  & $3150$ \\
& NASNET-A \cite{zoph2018learning}            & 3.4 &  3.3 &  & $2000$ \\
& Large-scale Evolution \cite{real2017large}  & 5.4 &  5.4 &  & $2600$ \\
\hline \hline
\multirow{4}{*}{Morphisms} 
& NASH \cite{elsken2017simple}                                  & 5.2 & 19.7 & \checkmark & $1.0$ \\
& Slimming \cite{liu2017learning} & 6.2 & 2.3 & \checkmark & - \\& Firefly \cite{wu2020firefly}                                 & 6.2 &  1.9 & \checkmark & - \\
& Net2Net \cite{chen2015net2net}                                 & 6.5 &  3.9 & \checkmark & $2.1$ \\
\hline \hline
\multirow{4}{*}{Human-\newline Designed} 
& DenseNet \cite{huang2017densely}                              & 3.5 & 25.6 &  & N/A \\
& VGG-19 Baseline \cite{simonyan2014very}                          & 6.3 & 20.0 & \checkmark & N/A \\
& ResNet \cite{huang2016deep}                                      & 6.4 &  1.7 &  & N/A \\
& MobileNetV1 Baseline \cite{howard2017mobilenets}                 & 6.6 & 3.2 & \checkmark & N/A \\
\hline \hline
\multirow{5}{*}{Ours} 
& Seed VGG-19, $\lambda_p = 3 \times 10^{-7}$         & 5.6 & 1.2 & \checkmark & $0.7$ \\
& Seed VGG-19, $\lambda_p = 1 \times 10^{-6}$        & 6.5 & 0.6 & \checkmark & $0.5$ \\
& Seed MobileNetV1, $\lambda_p = 3 \times 10^{-8}$ & 5.8 & 0.8 & \checkmark & $1.0$ \\
& Seed MobileNetV1, $\lambda_p = 3 \times 10^{-7}$ & 6.0 & 0.5 & \checkmark & $1.0$ \\
& Seed MobileNetV1, $\lambda_p = 1 \times 10^{-6}$ & 6.2 & 0.4 & \checkmark & $0.7$ \\
\hline
\end{tabular}
\caption{Classification performance of various architectures on CIFAR-10.}
\label{table:cifar10}
\end{table*}

\begin{table*}[ht]
\centering
\begin{tabular}{|c|c|c|c|c|c|}
\hline
Method Type & Reference & Error & Params & Reachable & GPU time \\
 &  & (\%) & (Millions) & & (days) \\
\hline \hline 
\multirow{2}{*}{SOTA}
& Large-scale Evolution \cite{real2017large}  & 23.0 & 40.4 &  & - \\
& SMASH \cite{brock2017smash} & 22.1 & 4.6 &  & - \\
\hline \hline
\multirow{2}{*}{Morphisms}
& NASH \cite{elsken2017simple}  & 23.4 & 22.3 & \checkmark & $1.0$ \\
& Slimming \cite{liu2017learning} & 26.5 & 5.0 & \checkmark & - \\
\hline \hline
\multirow{4}{*}{Human \linebreak Designed}
& DenseNet \cite{huang2017densely}        & 17.2 & 25.6 & & N/A \\
& Resnet \cite{he2016deep}                & 27.2 & 1.7 & & N/A \\
& VGG-19 Baseline \cite{simonyan2014very}                         & 27.6 & 20.1 & \checkmark & N/A \\
& MobileNetV1 Baseline \cite{howard2017mobilenets}                   & 28.7 & 3.3 & \checkmark & N/A \\
\hline \hline
\multirow{5}{*}{Ours} 
& Seed VGG-19, $\lambda_p = 3 \times 10^{-7}$ & 27.2 & 2.2 & \checkmark & 0.7 \\
& Seed VGG-19, $\lambda_p = 6 \times 10^{-7}$ & 28.0 & 1.6 & \checkmark & 0.6 \\
& Seed MobileNetV1, $\lambda_p = 1 \times 10^{-6}$ & 27.2 & 0.8 & \checkmark & 0.8 \\
& Seed MobileNetV1, $\lambda_p = 6 \times 10^{-7}$ & 26.9 & 1.3 & \checkmark & 1.0 \\
& Seed MobileNetV1, $\lambda_p = 3 \times 10^{-7}$ & 25.9 & 1.4 & \checkmark & 1.0 \\
\hline
\end{tabular}
\caption{Classification performance of various architectures on CIFAR-100.}
\label{table:cifar100}
\end{table*}

\section{CONCLUSION}
In this paper, we presented a neural architecture search method for growing a network with network morphisms while training. We used a Gauss-Newton approximation of the loss to learn morphism parameters and to estimate the change in the loss resulting from applying those morphisms. We used the estimated change in loss to compute a loss-resource tradeoff for each morphism using hyperparameters that regularized the number of parameters of the grown network. We evaluated the goodness of our Gauss-Newton approximation and found that the Gauss-Newton approximation is highly accurate and yields morphism parameters that are close to optimal. We compared our method with state of the art NAS methods for classifying CIFAR-10 and CIFAR-100 and concluded that our algorithm finds similar or better architectures at a smaller computational cost.

\clearpage

\bibliography{aistats2025}
\bibliographystyle{plain}

\clearpage

\appendix 
\section{Gauss-Newton Approximation}
In this section we review the justification for Gauss-Newton approximation. We begin by assuming that the loss function is well-approximated by a least-squares problem in $z$, i.e., for some matrix $A$ and vector $b$,
\begin{align*}
\mathcal L(z) &\approx \frac{1}{2} \| Az - b \|_2^2 \\
&= \frac{1}{2} b^\top b - z^\top A^\top b + \frac{1}{2} z A^\top A z.
\end{align*}
Denote the residual:
\begin{align*}
    r = Az - b
\end{align*}
Note that $\mathcal L = \frac{1}{2} r^\top r$. Denote the gradient and Hessian of the loss:
\begin{align*}
    g &= A^\top r & H &= A^\top A
\end{align*} Consider the change in the loss function when adding a quantity $\Delta z$ to $z$. Denote the change in loss:
\begin{align*}
\Delta \mathcal L(\Delta z) &\equiv \mathcal L(z + \Delta z) - \mathcal L(z) \\
&= \Delta z \cdot g + \frac{1}{2} \Delta z^\top A^\top A \Delta z \\
&= \Delta z \cdot g + \frac{1}{2} \Delta z^\top H \Delta z \\
\end{align*}
In this paper, we write the Gauss-Newton approximation as
\[
H \approx \frac{1}{2 \mathcal L} g g^\top,
\]
\begin{theorem}[General Gauss-Newton Approximation]
If ${\Delta z = \lambda \Delta z^*}$ for some $\lambda \in \mathbb R$ and some $\Delta z^*$ satisfying $A (z + \Delta z^*) = b$, then
\[
\frac{1}{2} \Delta z^\top H \Delta z = \frac{1}{2} \Delta z \frac{gg^\top}{2 \mathcal L} \Delta z
\]
\end{theorem}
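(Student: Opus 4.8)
The plan is to reduce both sides of the claimed identity to the same scalar by exploiting the geometric meaning of the hypothesis. First I would rewrite the condition $A(z + \Delta z^*) = b$ as $A \Delta z^* = b - Az = -r$, using the definition of the residual $r = Az - b$. Since $\Delta z = \lambda \Delta z^*$, this immediately yields the key relation
\[
A \Delta z = -\lambda r,
\]
i.e.\ the image of $\Delta z$ under $A$ is a scalar multiple of the residual. This is precisely the structural fact that makes the rank-one Gauss-Newton approximation exact in this direction.

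Next I would evaluate the left-hand side directly from $H = A^\top A$:
\[
\tfrac{1}{2} \Delta z^\top H \Delta z = \tfrac{1}{2} (A \Delta z)^\top (A \Delta z) = \tfrac{1}{2} \| A \Delta z \|_2^2 = \tfrac{1}{2} \lambda^2 \| r \|_2^2 = \lambda^2 \mathcal L,
\]
where the final equality uses $\mathcal L = \tfrac{1}{2} r^\top r$, so that $\| r \|_2^2 = 2 \mathcal L$. For the right-hand side I would first compute the inner product $\Delta z \cdot g$ using $g = A^\top r$ together with the key relation:
\[
\Delta z \cdot g = \Delta z^\top A^\top r = (A \Delta z)^\top r = -\lambda\, r^\top r = -2 \lambda \mathcal L.
\]
Substituting into the Gauss-Newton quadratic form then gives
\[
\tfrac{1}{2} \Delta z^\top \tfrac{g g^\top}{2 \mathcal L} \Delta z = \tfrac{1}{4 \mathcal L} (\Delta z \cdot g)^2 = \tfrac{1}{4 \mathcal L} (2 \lambda \mathcal L)^2 = \lambda^2 \mathcal L.
\]
Since both sides equal $\lambda^2 \mathcal L$, the identity follows.

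I do not expect any serious obstacle: once the hypothesis is translated into $A \Delta z = -\lambda r$, everything reduces to routine linear algebra. The only point requiring care is the bookkeeping of the factor $2 \mathcal L = \| r \|_2^2$, which sits in the denominator of the approximation and must cancel correctly against the squared inner product $(\Delta z \cdot g)^2$. The conceptual takeaway, which I would emphasize, is that the approximation $H \approx \tfrac{1}{2 \mathcal L} g g^\top$ is exact exactly for those directions $\Delta z$ whose image $A \Delta z$ is parallel to the residual $r$ — that is, directions pointing toward an exact solution of the underlying least-squares problem.
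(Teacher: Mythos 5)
Your proof is correct and follows essentially the same route as the paper's: both derive the key relation $A \Delta z = -\lambda r$ from the hypothesis and then evaluate each quadratic form, showing both reduce to $\tfrac{1}{2}\lambda^2 r^\top r = \lambda^2 \mathcal L$. Your write-up is slightly more explicit in tracking the cancellation of the $2\mathcal{L} = r^\top r$ factor, but the argument is the same.
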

\begin{proof}
If ${\Delta z = \lambda \Delta z^*}$ and $A (z + \Delta z^*) = b$, then ${A \Delta z = - \lambda r}$. So
\begin{gather*}
 \frac{1}{2} \Delta z^\top \left( \frac{g g^\top}{2 \mathcal L} \right) \Delta z \\
= \frac{1}{2} \Delta z^\top \left( \frac{A^\top r r^\top A}{r^\top r} \right) \Delta z \\
= \frac{1}{2} \lambda^2 r^\top r
\end{gather*}
Similarly,
\begin{align*}
    \frac{1}{2} \Delta z^\top H  \Delta z &= \frac{1}{2} \Delta z A^\top A \Delta z \\
    &= \frac{1}{2} \lambda^2 r^\top r
\end{align*}
\end{proof}
Therefore, we say the Gauss-Newton approximation is exact in the space spanned by the solutions $\Delta z^*$ to the linear system $A (z + \Delta z^*) = b$.
\begin{theorem} [Rank-1 Gauss-Newton Approximation]
If $H$ is rank-1 and there exists a solution $z^*$ to the linear system $A z^* = b$, then for all $\Delta z$,
\[
\frac{1}{2} \Delta z^\top H \Delta z = \frac{1}{2} \Delta z \frac{gg^\top}{2 \mathcal L} \Delta z
\]
\end{theorem}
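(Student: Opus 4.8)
The plan is to prove the stronger fact that, under these hypotheses, the Gauss-Newton approximation equals the Hessian \emph{exactly as a matrix}, i.e. $\frac{g g^\top}{2\mathcal L} = H$. The claimed equality of quadratic forms is then immediate for every $\Delta z$, which is exactly the improvement over the previous theorem: there the identity held only on the subspace of solutions $\Delta z^*$, whereas the rank-1 assumption will let me drop that restriction.

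First I would exploit the rank-1 hypothesis. Since $\operatorname{rank}(A^\top A) = \operatorname{rank}(A)$, the matrix $H = A^\top A$ being rank-1 forces $A$ itself to be rank-1, so I can write it as an outer product $A = c\, w^\top$ for some nonzero $c \in \mathbb R^m$ and $w \in \mathbb R^n$. This immediately gives the clean form $H = (c^\top c)\, w w^\top = \|c\|^2\, w w^\top$, reducing everything to the two vectors $c$ and $w$.

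Next I would use the solvability hypothesis. Because $A z^* = b$ for some $z^*$, the residual factors through $A$ as $r = Az - b = A(z - z^*)$, which places $r$ in the one-dimensional column space of $A$; concretely $r = t\, c$ with $t = w^\top(z - z^*)$ a scalar. The remaining quantities then follow by direct substitution: $g = A^\top r$ is a scalar multiple of $w$, $\mathcal L = \tfrac12 r^\top r$ is a scalar, and $g g^\top$ is proportional to $w w^\top$. Forming the ratio $g g^\top / (2\mathcal L)$, the factors of $t^2$ and one factor of $\|c\|^2$ cancel, leaving precisely $\|c\|^2 w w^\top = H$. Since this is a matrix identity, substituting it into $\tfrac12 \Delta z^\top(\cdot)\Delta z$ yields the statement for arbitrary $\Delta z$.

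The main obstacle is conceptual rather than computational: one must see that the two hypotheses play distinct and complementary roles — rank-1 of $H$ collapses $A$ to an outer product, while solvability of $Az^*=b$ forces $r$ to be collinear with $c$ — and that \emph{both} are needed for the cancellation to reproduce $H$ exactly rather than merely on a subspace. A minor caveat I would flag is the degenerate case $\mathcal L = 0$ (equivalently $t = 0$, $r = 0$), in which $g g^\top/(2\mathcal L)$ is a $0/0$ indeterminate; the statement implicitly assumes $\mathcal L \neq 0$, matching the convention already used in the preceding theorem.
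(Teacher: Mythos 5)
Your proof is correct, but it takes a genuinely different route from the paper's. The paper reduces the rank-1 case to the preceding theorem (General Gauss-Newton Approximation): it takes $A$ to be a single row $u^\top$, and for an arbitrary $\Delta z$ exhibits a solution $\Delta z^*$ of $A(z+\Delta z^*) = b$ together with a scalar $\lambda$ such that $\Delta z = \lambda \Delta z^*$, so that the earlier subspace identity applies to every direction. You instead prove the stronger \emph{matrix} identity $gg^\top/(2\mathcal L) = H$, by factoring the rank-1 matrix as $A = c\,w^\top$ and using solvability to force the residual into the column space, $r = t\,c$; the quadratic-form equality for all $\Delta z$ is then an immediate corollary. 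Your route buys three things: it is self-contained (no appeal to the previous theorem); it handles general rank-1 $A$ with several proportional rows, rather than literally a single row as the paper assumes; and it treats directions with $A\Delta z = 0$ uniformly, whereas the paper's explicit formula for $\Delta z^*$ divides by $u^\top \Delta z$ (misprinted as $u^\top z$ in the paper) and thus silently assumes $u^\top \Delta z \neq 0$. What the paper's route buys in exchange is the geometric picture: rank-1 structure makes the ``span of solutions'' from the first theorem fill out the whole space, so exactness on that subspace becomes exactness everywhere. Your caveat about the degenerate case $\mathcal L = 0$ is well taken; the division by $2\mathcal L$ in the statement implicitly requires $r \neq 0$, and this applies equally to the paper's own proof.
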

\begin{proof}
If $H$ is rank-1, then $A$ consists of a single row $u^\top$ and $b \in \mathbb R$ is a scalar.

Let $\Delta z$ be arbitrary. Then there exists a solution $\Delta z^*$ to the linear system $A(z + \Delta z^*) = b$ and $\lambda \in \mathbb R$ such that $\Delta z = \lambda \Delta z^*$, namely
\begin{gather*}
     \Delta z^* = \frac{b - u^\top z}{u^\top z} \Delta z \\
     \lambda = \frac{u^\top \Delta z}{b - u^\top z}
\end{gather*}
since then
\begin{align*}
    A(z + \Delta z^*) &= u^\top z + u^\top \Delta z^* \\
    &= u^\top z + u^\top \left( \frac{b - u^\top z}{u^\top z} \Delta z \right) \\
    &= b
\end{align*}
Applying the previous theorem yields the result.
\end{proof}
From this it is clear that we can expect the Gauss-Newton approximation to be quite accurate if the true Hessian matrix $H$ is low-rank.

\end{document}


%

%

\onecolumn
\aistatstitle{Instructions for Paper Submissions to AISTATS 2025: \\
Supplementary Materials}

\section{FORMATTING INSTRUCTIONS}

To prepare a supplementary pdf file, we ask the authors to use \texttt{aistats2025.sty} as a style file and to follow the same formatting instructions as in the main paper.
The only difference is that the supplementary material must be in a \emph{single-column} format.
You can use \texttt{supplement.tex} in our starter pack as a starting point, or append the supplementary content to the main paper and split the final PDF into two separate files.

Note that reviewers are under no obligation to examine your supplementary material.

\section{MISSING PROOFS}

The supplementary materials may contain detailed proofs of the results that are missing in the main paper.

\subsection{Proof of Lemma 3}

\textit{In this section, we present the detailed proof of Lemma 3 and then [ ... ]}

\section{ADDITIONAL EXPERIMENTS}

If you have additional experimental results, you may include them in the supplementary materials.

\subsection{The Effect of Regularization Parameter}

\textit{Our algorithm depends on the regularization parameter $\lambda$. Figure 1 below illustrates the effect of this parameter on the performance of our algorithm. As we can see, [ ... ]}

\vfill